\documentclass[10pt,twocolumn,letterpaper]{article}
\usepackage[margin=0.78in]{geometry}
\usepackage{times}
\usepackage{amsmath,amssymb,amsthm,amsfonts,bm}
\usepackage{enumitem}
\usepackage{graphicx}
\usepackage{booktabs}
\usepackage{algorithm}
\usepackage[noend]{algpseudocode}
\usepackage{caption}
\usepackage{subcaption}
\usepackage{microtype}
\usepackage{url}
\usepackage{xcolor}
\usepackage[colorlinks=true,linkcolor=black,citecolor=black,urlcolor=blue!55!black]{hyperref}
\usepackage{fancyhdr}

\theoremstyle{plain}
\newtheorem{proposition}{Proposition}

\theoremstyle{remark}
\newtheorem{remark}{Remark}

\newcommand{\nn}{\mathbf{n}}
\newcommand{\lv}{\bm{\ell}}

\title{\vspace{-6mm}\bf Geometry-Driven Shadow Harmonisation for Composited Faces:\\
A Multiplicative, Albedo-Preserving Relighting Pipeline}

\author{
\normalsize Vijesh KP\\[1mm]
\normalsize \texttt{vijeshkpaei@gmail.com}
}
\date{}

\begin{document}
\maketitle
\thispagestyle{fancy}

\begin{abstract}
\noindent
Face swapping and face compositing pipelines routinely produce a face that is
geometrically well aligned but photometrically implausible: the donor face
carries flat, near-frontal studio illumination while the host body and
background carry directional scene light. Most existing remedies re-synthesise
the face---through colour transfer, neural relighting, or full inverse
rendering---and therefore risk altering identity, skin tone, and texture. We
present a deliberately conservative alternative: \emph{geometry-driven form-shadow
injection}. The pipeline never repaints the face. It estimates a per-pixel gain
field $g\in[g_{\min},1]$ from a rasterised 3D face proxy and multiplies it
channel-uniformly onto the original linear RGB, so the operator can only darken
and cannot shift chromaticity. A dense landmark mesh is rasterised into a depth
buffer, from which we derive surface normals, a depth-difference cavity term,
and screen-space cast shadows. Key-light \emph{direction} is estimated from
host-side cues (body, background, hair halo); on-face cues are downweighted
because they recover the donor's lighting. Shadow \emph{magnitude} is not matched
to the host: it is set by a three-parameter transfer $(\tau,\sigma,g_{\min})$.
The shading field is divided by its $75$th percentile over skin, so the
brightest quartile maps to unity, then gated, scaled, clamped, smoothed, and
re-clipped before compositing inside a feathered, skin-gated face mask. On an
analytic face heightfield, the default $(\tau,\sigma,g_{\min})=(0.90,0.45,0.82)$
modifies $56.5\%$ of face pixels with mean gain $0.938$ ($0.890$ on modified
pixels) and drives $3.4\%$ of pixels to the floor. Hue invariance is a corollary
of the operator, not an empirical finding. We analyse the transfer in closed
form, ablate its parameters, and discuss the failure modes of a monotone,
darkening-only formulation---including double-shadowing of non-flat donors. Implementation code is released at: \url{https://github.com/vijeshkpaei/geometry-driven-shadow-harmonisation}   
\end{abstract}

\vspace{1mm}
\noindent\textbf{Keywords:} face harmonisation, image compositing, form shadow,
ambient occlusion, screen-space shadows, deepfake post-processing.

\section{Introduction}

A composited or swapped face is usually evaluated by two criteria: whether the
identity is preserved and whether the seam is invisible. A third criterion is
often the one that actually betrays the composite to a human observer---whether
the \emph{light} on the face agrees with the light on everything else in the
frame. Donor faces are frequently harvested from catalogue or passport-like
imagery captured under soft, frontal, near-shadowless illumination. When such a
face is transplanted onto a body photographed under a directional key light, the
face reads as flat: the nose casts no shadow, the eye sockets carry no
occlusion, and both cheeks are equally bright while the shoulder below shows an
unmistakable light direction.

The obvious response is to relight the face. Full relighting, however, is an
aggressive operation. Colour-transfer methods \cite{reinhard2001} match global
statistics and consequently move skin tone; neural relighting and portrait
harmonisation networks \cite{sun2019,zhou2019,pandey2021} synthesise new pixels
and can subtly alter identity, erase pores, or hallucinate specular highlights;
inverse-rendering approaches \cite{barron2015,sengupta2018} must separate albedo
from shading, and any error in that separation is written directly into the
output. In a production compositing context---where the face has already been
chosen, graded, and approved---these are unattractive risks.

This paper takes the opposite stance. We assume the donor face is a usable
stand-in for albedo---correct in colour and identity, missing only
\emph{form shadow}---and that this stand-in must not be re-synthesised. We do
not match host irradiance: we inject geometric darkening at a conservative
magnitude. The operator is a single, strictly non-expansive multiply,
\begin{equation}
  I_{\text{out}}(\mathbf{x}) \;=\; I_{\text{in}}(\mathbf{x})\, g(\mathbf{x}),
  \qquad g(\mathbf{x}) \in [g_{\min},\,1],
  \label{eq:core}
\end{equation}
with the same scalar $g$ applied to all three colour channels, after
linearisation from sRGB. Under the hypotheses of
Proposition~\ref{prop:bound}, three properties follow from the transfer
alone---independent of how badly the geometry or light \emph{direction}
fails---provided a post-smoothing clip restores $g\in[g_{\min},1]$ and the
optional tint path is left at zero:

\begin{enumerate}\itemsep1pt
  \item \textbf{No brightening.} Since $g \le 1$, the method cannot invent
        highlights or clip into the white point.
  \item \textbf{No chromaticity shift.} Since $g$ is channel-uniform, linear
        RGB ratios $(R\!:\!G\!:\!B)$ are invariant; only radiance scales.
  \item \textbf{Bounded darkening.} Since $g \ge g_{\min}$, the worst-case
        linear-radiance reduction is the known constant $1-g_{\min}$.
\end{enumerate}

These invariants bound photometric damage; they do not bound perceptual
error. A misplaced $18\%$ shadow is still an artefact, and a donor that is
already shaded can be double-shadowed.

\begin{proposition}[Bounded, chromaticity-preserving operator]
\label{prop:bound}
Let $I_{\mathrm{out}} = I_{\mathrm{in}}\, g$ with
$g:\Omega\to[g_{\min},1]$, $g_{\min}\in[0,1]$, applied identically to the
R, G, B channels at each pixel in linear RGB, and with the optional tint
path disabled. Then for every pixel $\mathbf{x}\in\Omega$:
\begin{enumerate}[label=(\roman*),itemsep=1pt,topsep=2pt]
  \item $I_{\mathrm{out}}(\mathbf{x}) \le I_{\mathrm{in}}(\mathbf{x})$
        componentwise (no brightening);
  \item if $I_{\mathrm{in},c'}(\mathbf{x})\ne 0$, the chromaticity ratio
        $I_{\mathrm{out},c}(\mathbf{x}) / I_{\mathrm{out},c'}(\mathbf{x})
        = I_{\mathrm{in},c}(\mathbf{x}) / I_{\mathrm{in},c'}(\mathbf{x})$
        for any two channels $c,c'$;
  \item $\lVert I_{\mathrm{out}}(\mathbf{x}) - I_{\mathrm{in}}(\mathbf{x})
        \rVert \le (1-g_{\min})\lVert I_{\mathrm{in}}(\mathbf{x})\rVert$
        (bounded deviation, any absolute vector norm).
\end{enumerate}
\end{proposition}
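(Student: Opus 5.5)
The argument works pixel by pixel. Fix $\mathbf{x}\in\Omega$ and write $s=g(\mathbf{x})\in[g_{\min},1]$. Because the tint path is disabled and the same scalar multiplies every channel, the operator at $\mathbf{x}$ is $I_{\mathrm{out},c}(\mathbf{x}) = s\, I_{\mathrm{in},c}(\mathbf{x})$ for each $c\in\{R,G,B\}$. In vector form this is $I_{\mathrm{out}}(\mathbf{x}) = s\,I_{\mathrm{in}}(\mathbf{x})$, a scalar multiple of the input. Each of the three claims then reduces to elementary properties of scalar multiplication by a number in $[0,1]$.

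For (i), I would use the fact that linear RGB radiances are nonnegative, so $I_{\mathrm{in},c}(\mathbf{x})\ge 0$. This hypothesis is implicit in ``linear RGB'' and should be stated explicitly. It gives $I_{\mathrm{in},c} - I_{\mathrm{out},c} = (1-s)I_{\mathrm{in},c}\ge 0$, since $1-s\ge 0$.

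For (ii), I would first show the denominator is nonzero. Assuming $I_{\mathrm{in},c'}(\mathbf{x})\neq0$, we have $I_{\mathrm{out},c'} = s I_{\mathrm{in},c'}\neq 0$ only when $s>0$. So I would either require $g_{\min}>0$ or, when $g_{\min}=0$, restrict to pixels with $s>0$; at $s=0$ the ratio is undefined rather than changed. Once the denominator is nonzero, the factor $s$ cancels: $sI_{\mathrm{in},c}/(sI_{\mathrm{in},c'}) = I_{\mathrm{in},c}/I_{\mathrm{in},c'}$.

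For (iii), I would write $I_{\mathrm{out}}-I_{\mathrm{in}} = -(1-s)I_{\mathrm{in}}$. Absolute homogeneity of any norm then gives $\lVert I_{\mathrm{out}}-I_{\mathrm{in}}\rVert = (1-s)\lVert I_{\mathrm{in}}\rVert \le (1-g_{\min})\lVert I_{\mathrm{in}}\rVert$, using $s\ge g_{\min}$. This step needs only homogeneity of the norm, not absoluteness, so the qualifier ``absolute'' in the statement is harmless.

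None of the steps is technically hard. The main point needing care is making the implicit hypotheses explicit: nonnegativity of linear radiance for (i), and the degenerate case $g=0$ when $g_{\min}=0$ for (ii). I would also remark that the proof assumes the operator is applied exactly as $I_{\mathrm{in}}g$, with no quantisation or sRGB re-encoding in between. The properties hold in linear RGB, and (i) and (ii) transfer to sRGB only because the encoding is monotone and applied per channel; (ii) in particular does not carry over literally as a ratio identity after re-encoding.
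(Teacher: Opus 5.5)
Your proof is correct and follows the paper's argument: the paper likewise writes $I_{\mathrm{out}}-I_{\mathrm{in}}=I_{\mathrm{in}}(g-1)$ to get (i) and (iii) from $g\in[g_{\min},1]$, and cancels the common scalar $g$ in the channel ratio for (ii). Your explicit use of nonnegative linear radiance in (i) and your handling of the degenerate case $g(\mathbf{x})=0$ when $g_{\min}=0$ in (ii) state hypotheses the paper leaves implicit; the paper only says the factor cancels ``in any well-defined channel ratio.''
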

\begin{proof}
(i) and (iii) follow from $g\in[g_{\min},1]$ and
$I_{\mathrm{out}}-I_{\mathrm{in}}=I_{\mathrm{in}}(g-1)$. For (ii),
$I_{\mathrm{out},c}=g\,I_{\mathrm{in},c}$ cancels in any well-defined
channel ratio. Linear chromaticity is therefore exact. Encoded sRGB
channel ratios are only approximately invariant, because the IEC 61966
transfer has an affine offset; the discrepancy is negligible at
photographic luminances. Face-only compositing (Sec.~\ref{sec:comp})
replaces $g$ by the convex combination
$g_{\mathrm{eff}}=1-\alpha(1-g)\in[g_{\min},1]$, so (i)--(iii) survive
masking.
\end{proof}

\begin{remark}
Proposition~\ref{prop:bound} is a property of the applied gain, not of the
pipeline that produced it. It therefore supplies a \emph{worst-case
photometric bound} ($g\equiv g_{\min}$) without a case-by-case error
analysis. Section~\ref{sec:exp} does not re-measure that bound: it reports
typical gain statistics on one controlled proxy, under a known light
direction. The bound is also voided if $g$ is smoothed without a subsequent
clip, or if the tint path is enabled.
\end{remark}

\noindent The design question is then: how do we produce a $g$ field that is
geometrically plausible and stays inside the bound? Our contributions are:

\begin{itemize}\itemsep1pt
  \item A landmark-driven rasterisation stage that converts 468 face-mesh
        vertices into a dense depth buffer, normal field, coverage mask, and
        depth-derived cavity map at $512\times512$
        (Sec.~\ref{sec:geom}).
  \item A key-light estimator whose \emph{primary} cues are host-side (body
        silhouette, background highlights, hair/jaw halo). On-face cues are
        assigned the four lowest weights, because on a swapped face they
        recover the donor's studio lighting (Sec.~\ref{sec:light}).
  \item A shadow field combining wrapped Lambertian shading, screen-space
        cast shadows ray-marched on the depth buffer, and the cavity term,
        followed by $75$th-percentile normalisation that pins the brightest
        quartile of skin to unity (Sec.~\ref{sec:shadow}).
  \item A three-parameter shadow-to-gain transfer $(\tau,\sigma,g_{\min})$
        with a post-smoothing clip, analysed in Sec.~\ref{sec:gain} and
        characterised in Sec.~\ref{sec:exp}.
\end{itemize}

\begin{figure*}[t]
  \centering
  \includegraphics[width=\textwidth]{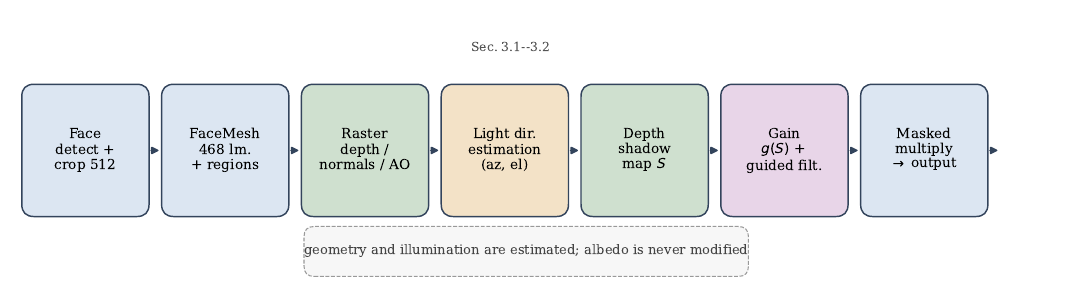}
  \caption{\textbf{Pipeline overview.} The image passes through detection,
  dense-landmark meshing, depth/normal/AO rasterisation, cue-based light
  estimation, shadow-map construction, gain transfer, and masked multiplication.
  Only the final stage touches pixels, and it touches them multiplicatively.
  Albedo, hair, clothing and background are untouched by construction.}
  \label{fig:pipeline}
\end{figure*}

\section{Related Work}

\paragraph{Image harmonisation.} Classical harmonisation matches low-order
colour statistics between a foreground and its new background
\cite{reinhard2001}, or solves a gradient-domain problem to hide the seam
\cite{perez2003}. Learned harmonisation \cite{tsai2017,cong2020} improves
realism but operates on the composite as an opaque appearance-transfer task; it
does not distinguish between an incorrect \emph{albedo} and an incorrect
\emph{shading}, which is precisely the distinction our problem demands.

\paragraph{Portrait relighting.} A large body of work relights faces from a
single image, either by regressing a spherical-harmonic illumination and
re-rendering \cite{zhou2019}, by training on light-stage captures
\cite{sun2019,pandey2021}, or by explicit inverse rendering into albedo,
normals, and light \cite{barron2015,sengupta2018}. These produce far richer
effects than we do---including specularity and subsurface transport---but they
regenerate skin pixels, and they provide no bound on identity or texture drift.

\paragraph{Geometry priors for faces.} Dense landmark regressors such as
MediaPipe Face Mesh \cite{kartynnik2019} return several hundred 3D vertices in
real time; 3D Morphable Models \cite{blanz1999} give a metrically better shape
at greater cost. Our depth proxy is intentionally coarse: because the output is
a bounded, smoothed, low-strength multiplier, a proxy that is correct in its
gross relief (nose forward, sockets back) suffices, and fine shape error is
attenuated by the $\sigma$ scaling and by smoothing $g$.

\paragraph{Screen-space shading.} Screen-space ambient occlusion
\cite{mittring2007} and screen-space ray-marched shadows \cite{sousa2011} are
standard real-time graphics techniques. We reuse the same \emph{depth-buffer}
setting on a face-only crop, where the usual weakness---missing off-screen
occluders---is largely irrelevant for nose, brow, and lip shadows, though it
still fails for a hat brim or a raised hand. Our cavity term is a
depth-unsharp-mask approximation, not a sampled hemisphere AO.

\section{Method}

\subsection{Overview and notation}

Let $I \in [0,1]^{H\times W\times 3}$ be the input photograph. We work in a
linearised space $\tilde I = \mathrm{srgb}^{-1}(I)$, since a multiplicative
darkening is only physically meaningful on linear radiance. Table~\ref{tab:not}
summarises notation. Figure~\ref{fig:pipeline} gives the stage sequence and
Fig.~\ref{fig:maps} shows every intermediate field on a synthetic subject.

\begin{table}[t]
\centering\small
\begin{tabular}{@{}ll@{}}
\toprule
Symbol & Meaning \\
\midrule
$D$ & rasterised face depth ($+$ toward camera) \\
$\nn$ & per-pixel unit surface normal \\
$C$ & mesh coverage ($1$ on face, $0$ elsewhere) \\
$A$ & cavity term (AO proxy), $A\in[0,1]$ \\
$\lv$ & unit direction \emph{toward} the key light \\
$\Lambda$ & wrapped Lambertian direct response \\
$V$ & screen-space cast-shadow visibility \\
$\rho$ & ambient ratio (soft vs.\ hard light) \\
$S$ & normalised shadow map, $S\in[0,1]$ \\
$g$ & per-pixel gain, $g\in[g_{\min},1]$ after the post-filter clip \\
$\alpha$ & face-only compositing mask \\
$k,\tau,\sigma$ & depth scale, lit-gate threshold, strength \\
\bottomrule
\end{tabular}
\caption{Notation.}
\label{tab:not}
\end{table}

\subsection{Face localisation and canonical crop}

A detector returns a face bounding box, which we expand by a crop scale
$s_c=2.2$ to a square region and resample to a canonical $512\times512$ working
crop. The expansion is deliberate: it admits the jaw, hair boundary, and a
margin of neck, all of which the light estimator (Sec.~\ref{sec:light})
consumes. The upsample to $512$ matters because catalogue faces are frequently
small in the source frame, and the mesh, depth buffer, and ray march all need
enough spatial resolution for the nose shadow to be more than a few pixels wide.

A dense mesh regressor then returns $468$ 3D vertices (plus iris refinements),
from which we fill semantic region masks: face oval, eyes, brows, lips, nose,
hair halo, neck, periocular ring, and outer rim. If detection or meshing fails,
the image is copied through unmodified. A \emph{wrong} mesh that still
returns vertices is not a fail: it will apply up to $1-g_{\min}$ of
darkening in the wrong places.

\subsection{Scene context}

Once per photograph we compute a person matte by portrait segmentation and
define the background as its complement, eroded away from the silhouette to
avoid matting fringe. This context is used \emph{only} to vote on light
direction. It is never blended, tinted, or propagated into the face.

\subsection{Geometry rasterisation}
\label{sec:geom}

We orient the mesh with $+X$ right, $+Y$ up, $+Z$ toward the camera, flipping
and scaling the regressor's $z$ by a depth scale $k$ (default $k=1.15$), which
mildly exaggerates relief and compensates for the well-known depth flattening of
landmark regressors. Mesh triangles are $z$-buffered into the crop, yielding
$D$, $C$, and, by central differences on $D$ in \emph{normalised} crop
coordinates (range $[-1,1]$),
\begin{equation}
  \nn \;=\; \frac{(-\partial_x D,\;\partial_y D,\;1)}
                 {\lVert(-\partial_x D,\;\partial_y D,\;1)\rVert}.
  \label{eq:normal}
\end{equation}
Here $x$ increases to the right and $y$ increases \emph{down} the image, so
the middle component equals $-\partial_Y D$ in the $+Y$-up camera frame.
(Using pixel coordinates without the normalisation would collapse $\nn$
toward $(0,0,1)$.) $\lv$ is the direction from the face toward the key
light, matching $\nn\cdot\lv$ in Eq.~\ref{eq:lambert}.

\paragraph{Cavity term.} A pixel that lies \emph{behind} its own
low-frequency neighbourhood sits in a socket, alar crease, or lip line.
With $\bar D_\varsigma = G_\varsigma * D$ a Gaussian-blurred depth
($\varsigma=14$~px) and
$\delta = (D-\bar D_\varsigma)/\mathrm{std}(D\mid C{>}0.5)$ standardised
over face pixels, we set
\begin{equation}
  A \;=\; 1 - a\bigl(1 - \mathrm{sigm}(\lambda\delta)\bigr),
  \label{eq:ao}
\end{equation}
with $a=0.75$, $\lambda=2$, and $\mathrm{sigm}(u)=(1+e^{-u})^{-1}$. This is
a signed depth-unsharp mask, not sampled-hemisphere AO: convexities
($\delta>0$) stay near $1$; concavities drop toward $1-a$. Landmark meshes
under-represent sharp creases, so we then apply a semantic cavity boost
\begin{equation}
  A \;\leftarrow\; \mathrm{clip}\bigl(A - a_f f (1-A+0.25),\,0,\,1\bigr),
\end{equation}
with $a_f=0.18$ and $f\in[0,1]$ a blob prior on eyes, lips, and nose wings.

\begin{figure*}[t]
  \centering
  \includegraphics[width=\textwidth]{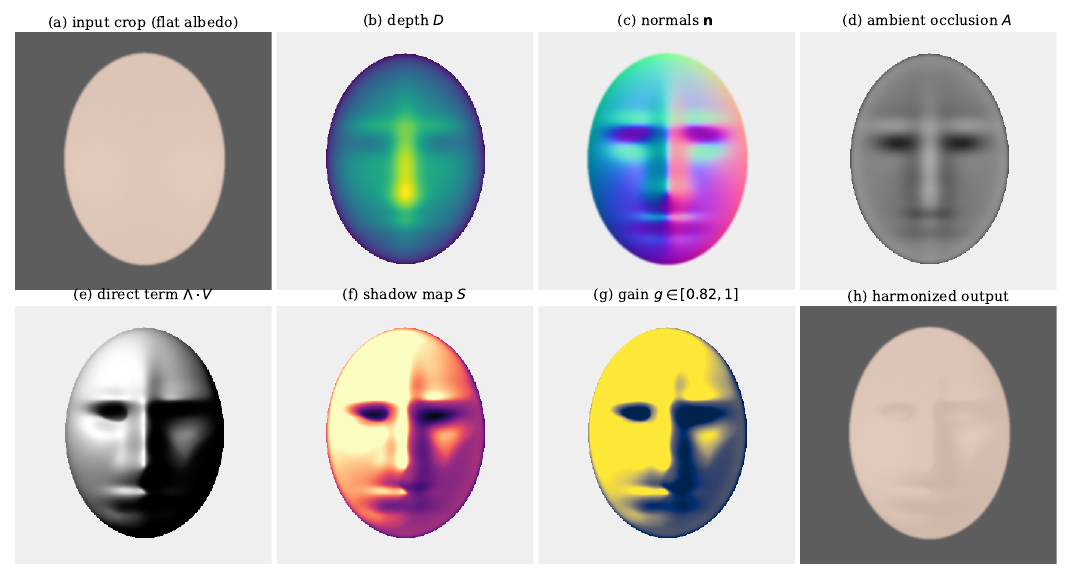}
  \caption{\textbf{Intermediate fields}, computed on an analytic face heightfield
  (Sec.~\ref{sec:exp}). (a) the flat input; (b) rasterised depth, nose forward
  and sockets recessed; (c) the normal field; (d) the depth cavity term $A$;
  (e) the direct term $\Lambda\!\cdot\!V$ under a key light at
  $(\text{az},\text{el})=(-30^\circ,30^\circ)$; (f) the percentile-normalised
  shadow map $S$, in which the brightest quartile of skin sits at unity;
  (g) the final gain $g$, bounded below by $0.82$; (h) the composited
  result. Panel (h) differs from (a) only in linear radiance---channel
  ratios are preserved.}
  \label{fig:maps}
\end{figure*}

\subsection{Key-light estimation}
\label{sec:light}

Estimating illumination from a face is normally done \emph{on} the face. That is
unavailable as a \emph{primary} cue: the inner cheek is exactly the region
whose shading we intend to synthesise, so reading light from it would recover
the donor's studio lighting and reinforce the flatness we are trying to remove.

We therefore put most of the weight on cues outside the shading target.
Modelling the torso as a vertical cylinder gives a left/right luminance
asymmetry that maps to azimuth---but only under an approximately uniform
torso albedo; a two-tone shirt or strap is an albedo edge, not a lighting
cue. The centroid of background highlights gives a second azimuth estimate;
the luminance halo immediately around the hair and jaw gives a third and is
the most reliable in tight crops, where a $2.2\times$ face box admits little
torso. Four weaker cues---a spherical fit to hair shading, eye catchlight
position, agreement between feature-region luminance and the depth normals,
and the chin shadow---refine elevation. On a swapped face the last three
live on the \emph{donor} and are circular: catchlights and residual chin
shadow belong to the studio setup, and a flat cheek versus a non-flat mesh
votes for frontal light. We therefore assign them the four lowest weights
(Fig.~\ref{fig:cues}, left). Each cue $i$ yields a direction $\lv_i$ with
confidence $w_i$, and the fused direction is the normalised weighted
resultant
\begin{equation}
  \lv \;=\; \frac{\sum_i w_i \lv_i}{\bigl\lVert \sum_i w_i \lv_i \bigr\rVert}.
  \label{eq:fuse}
\end{equation}
Let $R=\lVert\sum_i w_i\lv_i\rVert/\sum_i w_i$ be the mean resultant length.
Disagreement among cues (small $R$) is mapped to a larger ambient ratio
$\rho$, i.e.\ softer light, which is a conservative fail-safe rather than a
measurement of scene contrast. In the analytic experiments of
Sec.~\ref{sec:exp} we hold $\rho=0.58$ fixed in order to isolate the
geometry-to-gain map.

Two remarks. First, $\lv$ orients shadows only; the optional tint path is
held at zero, otherwise Proposition~\ref{prop:bound}(ii) is void. Second, an
error in $\lv$ misplaces a soft nose shadow rather than painting a colour
cast---but it does \emph{not} match the \emph{strength} of the host key. A
sunlit shoulder and an $18\%$ face shadow can still disagree.

\subsection{The depth shadow map}
\label{sec:shadow}

\paragraph{Direct term.} Skin exhibits pronounced subsurface transport, so a
hard Lambertian terminator is wrong. We use a wrapped Lambertian with wrap
$w=0.28$ and a sharpening exponent $\gamma=1.4$:
\begin{equation}
  \Lambda \;=\;
  \left[\mathrm{clip}\!\left(\frac{\nn\cdot\lv + w}{1+w},\,0,\,1\right)\right]^{\gamma}.
  \label{eq:lambert}
\end{equation}
The wrap carries a little light past $90^\circ$ (Fig.~\ref{fig:lambert}, left);
the exponent restores some crispness that the wrap removes.

\paragraph{Cast shadows.} A Lambertian term alone cannot place a nose shadow on
a cheek, because that cheek's normal still faces the light. We therefore march a
ray from each pixel toward $\lv$ across the depth buffer. In image coordinates
the step is $\hat{\mathbf{d}}=(\ell_x,-\ell_y)/\lVert(\ell_x,-\ell_y)\rVert$
(the sign on $\ell_y$ converts $+Y$-up into row-down). With $D$ increasing
toward the camera, a sample occludes when it is \emph{closer} than the ray,
i.e.\ when $D$ is larger. Writing $t$ in units of crop width, the ray depth
and a one-sided, distance-attenuated visibility are
\begin{equation}
\begin{aligned}
  m &= c_m\,\ell_z\big/\lVert(\ell_x,-\ell_y)\rVert,\\
  \Delta(t) &= D(\mathbf{x}+t\hat{\mathbf{d}}) - D(\mathbf{x}) - mt - b,\\
  V(\mathbf{x}) &= \min_{t\in(0,t_{\max}]}
    \Bigl[1 - \mathrm{clip}(\Delta(t)/\varepsilon,\,0,\,1)\,
          (1-t/t_{\max})\Bigr],
\end{aligned}
\label{eq:vis}
\end{equation}
with $c_m=0.55$, bias $b=0.004$, occlusion scale $\varepsilon=0.03$, and
$t_{\max}=0.22$. $V$ is then clipped to $[0,1]$ and Gaussian-smoothed
($\sigma_V{=}2$~px). The $\min$ is a strongest-along-ray occlusion, not a
first-hit test; negative $\Delta$ (no hit) leaves that sample at $1$. This
is what produces the alar shadow and the brow shadow over the medial eyelid.

\paragraph{Composition and normalisation.} Ambient and direct contributions are
combined as
\begin{equation}
  \Sigma \;=\; \rho A \;+\; (1-\rho)\,\Lambda V \sqrt{A},
\end{equation}
where the $\sqrt{A}$ on the direct path prevents cavities from being doubly
darkened. Finally---and this step is what makes the method safe---we divide by
the $75$th percentile of $\Sigma$ over skin and clip:
\begin{equation}
  S \;=\; \mathrm{clip}\!\left(\frac{\Sigma}{Q_{75}(\Sigma\,|\,\text{skin})},\,0,\,1\right).
  \label{eq:S}
\end{equation}
By construction the \emph{brightest quartile} of the skin ($25\%$ of pixels
with $\Sigma\ge Q_{75}$) now sits at $S=1$. The remaining three-quarters can
still be darkened; the $\tau$-gate of Sec.~\ref{sec:gain} then returns an
additional band with $S\ge\tau$ as $g=1$. (Dividing by $Q_{25}$ would pin
three-quarters of the skin to unity and would contradict the $56.5\%$
modified-pixel rate in Table~\ref{tab:main}.)

\begin{figure*}[t]
  \centering
  \begin{subfigure}{\textwidth}
    \centering
    \includegraphics[width=\textwidth]{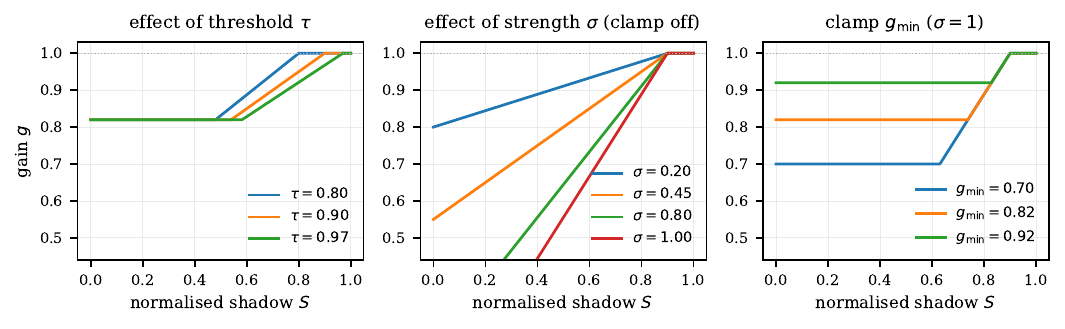}
  \end{subfigure}
  \caption{\textbf{The shadow-to-gain transfer of Eq.~\ref{eq:gain}.} The
  threshold $\tau$ sets the $S$-level treated as fully lit ($g{=}1$);
  raising $\tau$ darkens more of the face. The strength $\sigma$ rescales the
  darkening; the floor $g_{\min}$ truncates the deepest shadows (shown with the clamp disabled in the centre panel to isolate $\sigma$). At the defaults
  the three act together to keep $g$ inside a narrow band near unity.}
  \label{fig:gain}
\end{figure*}

\subsection{From shadow to gain}
\label{sec:gain}

The shadow map is not applied directly. It passes through a three-stage transfer
(Fig.~\ref{fig:gain}) controlled by a threshold $\tau$, a strength $\sigma$, and
a floor $g_{\min}$:
\begin{equation}
  g \;=\; \mathrm{clip}\Bigl(
    1 + \sigma\bigl(\underbrace{\mathrm{clip}(S/\tau,0,1)}_{\text{gated}} - 1\bigr),
    \; g_{\min},\; 1\Bigr).
  \label{eq:gain}
\end{equation}

Each term has a distinct role. The gate $\mathrm{clip}(S/\tau,0,1)$ declares
everything with $S\ge\tau$ to be fully lit, so with the default $\tau=0.90$
well-lit skin is left at $g=1$ \emph{before smoothing}. Raising $\tau$ makes
$S/\tau$ smaller and therefore \emph{enlarges} the darkened set. The strength
$\sigma=0.45$ is the fraction of the gap from $1$ down to $0$ travelled at
$S=0$ (unclamped $g=1-\sigma$); elsewhere the map is the corresponding linear
compression of the gated field, not a physically fitted host contrast. The
floor $g_{\min}=0.82$ caps the worst case at an $18\%$ \emph{linear-radiance}
reduction ($\approx 0.3$~stops; after sRGB encoding the display-referred ratio
is $0.82^{1/2.2}\approx 0.91$). A catastrophic mesh still cannot punch a
black hole into the face; it can still paint an $18\%$ grey patch on the
forehead. The closed-form range of the pre-smoothing operator is
\begin{equation}
  g \in \bigl[\max(g_{\min},\,1-\sigma),\;1\bigr]
      \;=\; [0.82,\,1] \text{ at defaults},
  \label{eq:range}
\end{equation}
which satisfies Proposition~\ref{prop:bound} provided $\tau>0$,
$\sigma\ge 0$, $g_{\min}\in[0,1]$, tint $=0$, and $g$ is clipped again after
smoothing.

Finally $g$ is smoothed at quarter resolution. On the target input---a
\emph{flat} donor---the image luminance $Y(P)$ contains pores and makeup but
\emph{not} the shading edges we just computed, so using $Y$ as a guided-filter
guide would blur those edges and could imprint texture onto $g$. We therefore
guide (or, equivalently, edge-stop) using the shadow map $S$ itself, upsample,
and \emph{re-clip} to $[g_{\min},1]$. Lit-cheek pixels adjacent to a nose
shadow are in general no longer bitwise copies of the input.

\subsection{Face-only compositing}
\label{sec:comp}

The mask $\alpha$ begins as the face oval, eroded slightly and feathered. Near
the hairline a skin gate restricts the effect to skin-chromaticity pixels so
that individual hair strands crossing the forehead are excluded. Eyes and lips
receive partial protection, since sclera and specular lip highlights are
perceptually fragile. The mask is finally multiplied by mesh coverage $C$, which
guarantees $\alpha=0$ wherever no geometry exists. Compositing is
\begin{equation}
  I_{\text{out}} = \tilde I (1-\alpha) + \tilde I g\,\alpha
                 = \tilde I \bigl(1 - \alpha(1-g)\bigr),
  \label{eq:comp}
\end{equation}
which is again a pure per-pixel multiplier bounded in $[g_{\min},1]$---the mask
can only weaken the effect, never extend its range. The maps $g$ and $\alpha$
are resampled from the $512$ crop back to the source face box and pasted into
full-resolution buffers that are $1$ and $0$ respectively outside the crop.
The result is converted to sRGB and written out.

\begin{algorithm}[t]
\small
\caption{Geometry-driven shadow harmonisation}
\label{alg:main}
\begin{algorithmic}[1]
\Require image $I$; $k{=}1.15$, $a{=}0.75$, $\tau{=}0.90$, $\sigma{=}0.45$, $g_{\min}{=}0.82$
\State $\tilde I \gets \mathrm{srgb}^{-1}(I)$
\State box $\gets$ \textsc{Detect}$(I)$; \textbf{if} none \textbf{then return} $I$
\State $P \gets$ crop$(\tilde I,\,s_c{=}2.2)$ resized to $512^2$
\State $\mathcal{V} \gets$ \textsc{FaceMesh}$(P)$; \textbf{if} fail \textbf{then return} $I$
\State $D,\,C \gets$ \textsc{Rasterise}$(\mathcal{V},\,k)$
\State $\nn \gets$ \textsc{Normals}$(D)$;\quad $A \gets$ \textsc{AO}$(D,a)$
\State $\lv,\rho \gets$ \textsc{FuseLightCues}$(\tilde I,$ person matte, hair halo$)$
\State $\Lambda \gets$ \textsc{WrapLambert}$(\nn,\lv)$
\State $V \gets$ \textsc{RayMarchShadow}$(D,\lv)$
\State $\Sigma \gets \rho A + (1-\rho)\Lambda V\sqrt{A}$
\State $S \gets \mathrm{clip}(\Sigma / Q_{75}(\Sigma\,|\,\text{skin}),0,1)$
\State $g \gets \mathrm{clip}(1+\sigma(\mathrm{clip}(S/\tau,0,1)-1),g_{\min},1)$
\State $g \gets$ \textsc{Smooth}$(g,\;\text{guide}{=}\,S)$ at quarter resolution
\State $g \gets \mathrm{clip}(g,\,g_{\min},\,1)$
\State $\alpha \gets$ \textsc{BuildAlpha}(oval, skin gate, eye/lip, $C$)
\State $g,\alpha \gets$ paste into full-resolution maps ($g{=}1$, $\alpha{=}0$ off-crop)
\State \Return $\mathrm{srgb}\bigl(\tilde I\,(1-\alpha(1-g))\bigr)$
\end{algorithmic}
\end{algorithm}

\section{Implementation}

The pipeline runs at a canonical $512\times512$ and is resampled back afterwards,
so cost is independent of source resolution. Rasterisation, ambient occlusion,
and the ray march are the dominant terms; the ray march uses $T{=}48$ steps over
a maximum length of $0.22$ of the crop width, which is sufficient to reach from
the nose tip to the far cheek. The smoother is applied at quarter resolution
and upsampled, as the gain field is intrinsically low-frequency, then clipped.
Batch processing walks an input directory and skips files whose names carry the
pipeline's own suffixes (\texttt{\_harmonized}, \texttt{\_debug},
\texttt{\_compare}). That makes the \emph{tool} idempotent under re-invocation
on its own output directory; the operator itself is not: applying it twice to
the same pixels would darken twice, down to $g_{\min}$.

A nine-tile debug panel is emitted per image---input crop, depth, normals, AO,
shading, shadow map, gain, mask, and result---which in practice is the fastest
way to diagnose a bad output, since each stage is independently inspectable.

\begin{figure*}[t]
  \centering
  \includegraphics[width=\textwidth]{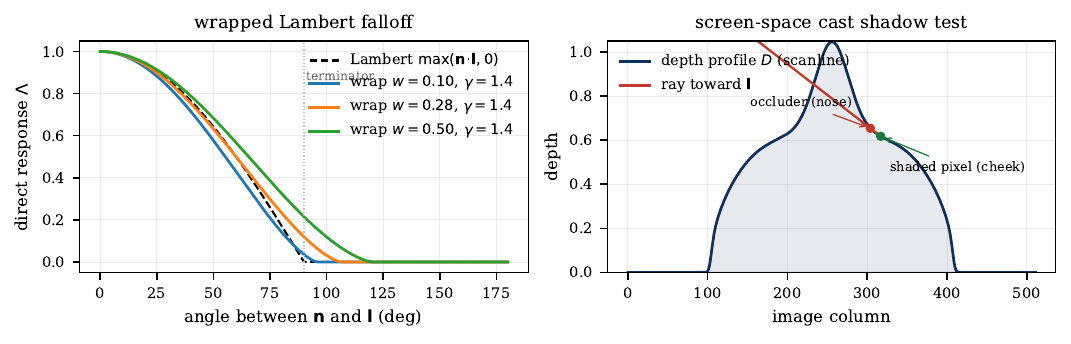}
  \caption{\textbf{Left:} the wrapped Lambertian response. Increasing the wrap
  $w$ moves light past the $90^\circ$ terminator, approximating subsurface
  transport in skin; the exponent $\gamma{=}1.4$ restores falloff crispness.
  \textbf{Right:} the screen-space cast-shadow test on a depth scanline. A ray
  is marched from the shaded pixel toward the light; a sample whose depth
  exceeds the ray (closer to the camera) occludes, and visibility is the
  strongest such occlusion along the ray. This is how a nose places a
  shadow on a cheek whose own normal still faces the light.}
  \label{fig:lambert}
\end{figure*}

\section{Experiments}
\label{sec:exp}

\paragraph{Protocol.} The operator is defined by geometry and a known light
direction, so its transfer can be characterised on a controlled proxy. We
construct an analytic face heightfield---a base cranial spheroid plus
Gaussian nose ridge, alar wings, brow ridges, recessed orbital sockets,
cheekbones, philtrum, lips, and chin crease---paired with a deliberately flat
albedo carrying only mild texture noise. This stands in for the pathological
input the method targets: correct identity, correct colour, no form shadow.
Every number below is measured over face pixels with $\alpha>0.5$ under a
\emph{supplied} key light at azimuth $-30^\circ$, elevation $30^\circ$, with
$\rho=0.58$ held fixed. Light estimation is therefore not part of Table~\ref{tab:main}.
We report statistics of the gain field rather than perceptual scores. No
ground-truth relit pair, no real composite, and no baseline comparison is
claimed: the intent is to characterise the operator's typical magnitude on
this proxy, not to demonstrate perceptual superiority or identity
preservation.

\paragraph{Default operating point.} Table~\ref{tab:main} reports the result. At
$(\tau,\sigma,g_{\min}) = (0.90,0.45,0.82)$, $56.5\%$ of face pixels have
$g<0.995$. The mean multiplier over \emph{all} face pixels is $0.938$ (a $6\%$
mean linear darkening), which is milder than it sounds: the $43.5\%$ of pixels
left near $1$ pull the average up; on modified pixels the mean is $0.890$
($\approx 11\%$ darkening). Only $3.4\%$ of pixels reach the floor, so the
clamp is a safety rail rather than a routine part of the transfer. The
hue-angle histogram in Fig.~\ref{fig:cues}(right) is a sanity check of
Eq.~\ref{eq:core} (maximum shift $<10^{-12}$ degrees, i.e.\ floating-point
noise), not an independent experimental finding. ``Pixels brightened'' is
guaranteed by $g\le 1$; evaluating only $\alpha>0.5$ does not validate the
mask on real hair strands.

\begin{table}[t]
\centering\small
\begin{tabular}{@{}lr@{}}
\toprule
Measurement (face pixels, $\alpha>0.5$) & Value \\
\midrule
Pixels modified ($g<0.995$) & $56.5\%$ \\
Mean gain $\bar g$ (all face pixels) & $0.938$ \\
Mean gain on modified pixels & $0.890$ \\
1st-percentile gain & $0.820$ \\
Pixels at floor $g_{\min}$ & $3.4\%$ \\
Max linear-radiance reduction & $18.0\%$ \\
Max hue-angle shift (corollary of Eq.~\ref{eq:core}) & $<10^{-12}\,^\circ$ \\
Pixels brightened ($g>1$) & $0.0\%$ \\
\bottomrule
\end{tabular}
\caption{Gain statistics at the default configuration, on the analytic
heightfield, with known $\lv$ and fixed $\rho{=}0.58$.}
\label{tab:main}
\end{table}

\paragraph{Threshold.} Figure~\ref{fig:ablation}(a) and Table~\ref{tab:abl} show
that $\tau$ behaves as an exposure-like control on \emph{extent}: raising it
enlarges the set of pixels considered shadowed, monotonically increasing both
coverage and mean darkening. Higher $\tau$ is therefore more aggressive, not
less.

\paragraph{Strength.} Figure~\ref{fig:ablation}(b) shows mean darkening rising
essentially linearly in $\sigma$ until the floor starts binding, after which the
fraction of clamped pixels grows sharply. The default $\sigma=0.45$ sits below
that inflection, which is the intended regime: the clamp should almost never be
the operative constraint. Figure~\ref{fig:sweep} shows the corresponding visual
sweep on the same analytic mesh; $\sigma=0$ recovers the input exactly, and
$\sigma=1$ is visibly over-modelled. Banding around the nasolabial region at
$\sigma=1$ here is smoothness of the \emph{analytic} heightfield; it is
suggestive of, but not a measurement of, MediaPipe mesh smoothness.

\paragraph{Depth scale.} Figure~\ref{fig:ablation}(c) plots the standard
deviation of $g$ against $k$. Over $k\in[0.6,2.0]$ the change is small
($\mathrm{std}(g)$ moves by only a few thousandths), so $k$ is a weak lever
once the wrapped terminator and the $g_{\min}$ clip are in play. We keep the
default $k=1.15$ as a mild anti-flattening of landmark depth, not because the
curve exhibits a sharp saturation knee.

\paragraph{Orientation check.} Figure~\ref{fig:cues}(centre) plots, on this
same heightfield, the left-minus-right mean gain as the supplied azimuth
varies. The asymmetry tracks the sign of the key light, which is the property
the shadow field must have if $\lv$ is correct. It is not a measurement of
the cue fusion error on photographs.

\begin{table}[t]
\centering\small
\begin{tabular}{@{}lrrr@{}}
\toprule
$\tau$ & modified (\%) & $\bar g$ & at floor (\%) \\
\midrule
0.70 & 35.2 & 0.978 & 0.4 \\
0.77 & 41.3 & 0.963 & 1.0 \\
0.85 & 49.2 & 0.948 & 2.1 \\
0.90$^\dagger$ & 56.5 & 0.938 & 3.4 \\
0.93 & 60.6 & 0.933 & 4.0 \\
1.00 & 72.7 & 0.917 & 5.6 \\
\bottomrule
\end{tabular}
\caption{Threshold ablation ($\sigma{=}0.45$, $g_{\min}{=}0.82$).
$\dagger$ denotes the default.}
\label{tab:abl}
\end{table}

\begin{figure*}[t]
  \centering
  \includegraphics[width=\textwidth]{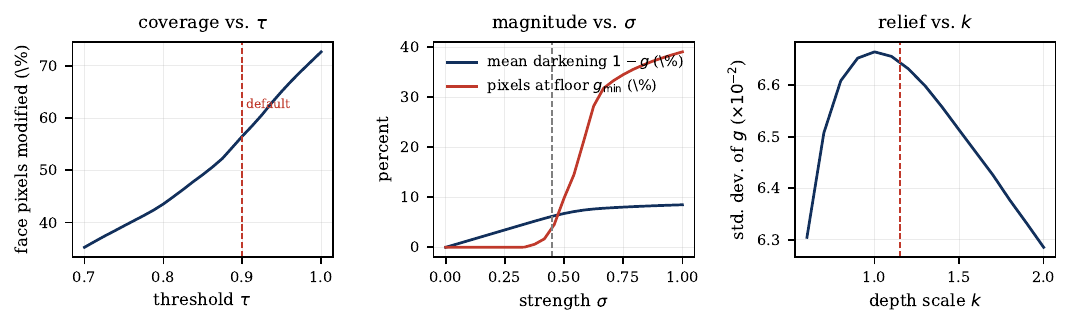}
  \caption{\textbf{Parameter ablations.} (a) Extent of the effect versus the
  threshold $\tau$. (b) Mean darkening and floor-clamping rate versus strength
  $\sigma$; the default sits below the inflection where the clamp begins to
  bind. (c) Relief contrast, measured as the standard deviation of $g$, versus
  depth scale $k$; over this range $\mathrm{std}(g)$ changes only slightly.}
  \label{fig:ablation}
\end{figure*}

\begin{figure*}[t]
  \centering
  \includegraphics[width=\textwidth]{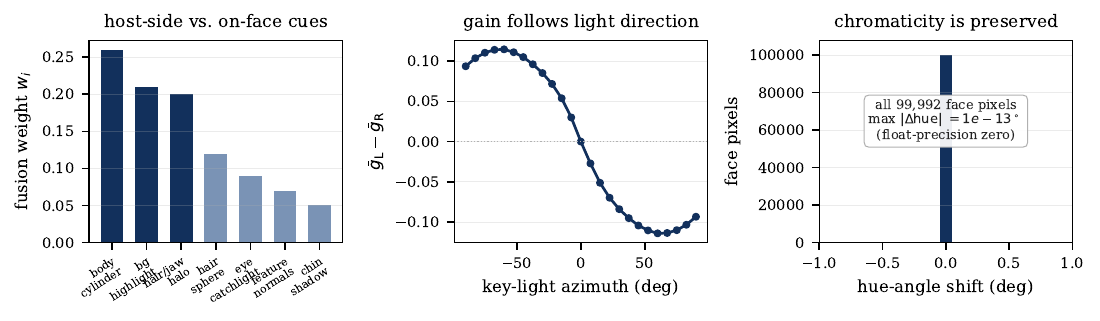}
  \caption{\textbf{Left:} assigned fusion weights. The three strongest cues
  (dark) are host-side; the four weakest (light) lie on or near the face and
  can recover donor lighting on a swap. \textbf{Centre:} left-minus-right
  mean gain on the analytic heightfield as a function of the \emph{supplied}
  key-light azimuth; the sign of the asymmetry tracks the light. This is not
  a photograph-based fusion-error plot. \textbf{Right:} hue-angle shift over
  face pixels---a sanity check of Eq.~\ref{eq:core}, at floating-point
  precision.}
  \label{fig:cues}
\end{figure*}

\section{Discussion}

\paragraph{What the constraints buy.} The multiplicative, darkening-only,
channel-uniform design converts an open-ended synthesis problem into a bounded
one. A detector miss is a no-op. A bad light direction or a bad mesh that still
rasterises produces a misplaced soft shadow of linear magnitude at most
$1-g_{\min}$, not a new identity. That is a different risk profile from
generative relighting, and it is why the method can run unattended---provided
one accepts that ``bounded'' is not the same as ``invisible.''

\paragraph{What the constraints cost.} The method cannot add specular
highlights, cannot correct a face that is already too dark on the wrong side,
cannot brighten a donor to match a brighter host key, and cannot fix a
colour-temperature mismatch (tint is fixed at zero). Magnitude is a user
parameter, not a measurement of body contrast, so a hard-sunlit shoulder can
still make an $18\%$ face shadow look flat. The operator also
\emph{double-shades} any residual donor shading: sockets that already contain
studio AO get darker, and a donor key from the opposite side is not removed.
Landmark meshes under-represent nasolabial folds and eyelid creases, so the
synthesised shadow is systematically smoother than a real one. Screen-space
shadows miss occluders outside the crop. The torso-cylinder cue assumes
roughly uniform clothing albedo.

\paragraph{What the evaluation does not show.} All quantitative tables use one
analytic heightfield and a known $\lv$. They do not measure cue fusion on
photographs, identity cosine before/after, or preference against colour
transfer or neural relighting. Those experiments remain future work; the
present numbers should not be read as a compositing benchmark.

\paragraph{Extensions.} Two directions preserve the safety guarantees. First,
replacing the landmark proxy with a fitted 3DMM or a monocular depth network
would sharpen the relief without changing the operator. Second, a paired
brightening channel with its own independent ceiling $g_{\max}>1$ would admit
gentle rim light while retaining a bounded envelope, at the cost of the
strict ``never brightens'' invariant. A host-contrast estimator for $\sigma$
would address the intensity-matching gap without enlarging the range of $g$.

\section{Conclusion}

We described a face-compositing post-process that injects form shadow without
regenerating skin. Geometry rasterised from dense landmarks yields depth,
normals, and a cavity term; a cue-fused key light orients wrapped Lambertian
shading and screen-space cast shadows; $75$th-percentile normalisation pins
the brightest quartile of skin to unity; and a gated, scaled, clamped
transfer, re-clipped after smoothing, yields a per-pixel multiplier in
$[0.82,1]$ at the defaults. The operator is monotone, chromaticity-preserving,
and bounded \emph{by construction}. On an analytic heightfield the default
configuration modifies just over half the face at a $6\%$ mean darkening
($11\%$ on modified pixels) while rarely engaging the floor. We view this as
evidence that a restricted operator with provable invariants is a reasonable
\emph{conservative} tool for post-hoc compositing---not that it solves lighting
disagreement in general, and not that it has been shown to outperform
unrestricted relighting on real swaps.

\appendix
\onecolumn
\begin{figure}[ht]
  \centering
  \includegraphics[width=\textwidth]{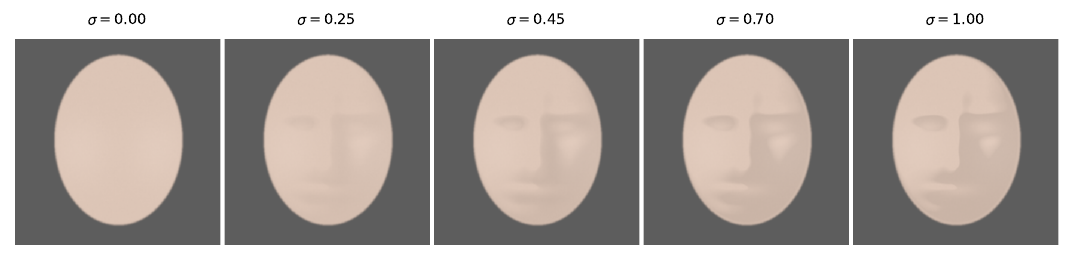}
  \caption{\textbf{Strength sweep} at fixed $\tau{=}0.90$, $g_{\min}{=}0.82$.
  $\sigma{=}0$ returns the input exactly; the default $\sigma{=}0.45$ introduces
  legible orbital and alar form shadow while leaving the lit cheek untouched;
  $\sigma{=}1$ over-applies. Smoothness here is that of the analytic
  heightfield, not a measured MediaPipe mesh.}
  \label{fig:sweep}
\end{figure}

\section*{Appendix A: Default hyperparameters}
\begin{center}\small
\begin{tabular}{@{}llll@{}}
\toprule
Flag & Symbol & Default & Effect \\
\midrule
\texttt{--threshold} & $\tau$ & 0.90 & Gate: $S\ge\tau$ maps to $g{=}1$ before smoothing. Higher $\Rightarrow$ \emph{more} of the face darkens. \\
\texttt{--gain-min} & $g_{\min}$ & 0.82 & Hard floor on the multiplier (linear radiance). Higher $\Rightarrow$ shadows cannot go as dark. \\
\texttt{--strength} & $\sigma$ & 0.45 & Fraction of the $1\!\to\!0$ gap applied at $S{=}0$. Higher $\Rightarrow$ harder application. \\
\texttt{--depth-scale} & $k$ & 1.15 & Exaggeration of mesh relief. Higher $\Rightarrow$ slightly deeper nose/brow shaping. \\
\texttt{--ao-strength} & $a$ & 0.75 & Weight of the depth-unsharp cavity term. Higher $\Rightarrow$ stronger sockets/creases. \\
\texttt{--tint} & --- & 0 & Chromatic adaptation toward the estimated light. Any non-zero value voids Proposition~\ref{prop:bound}(ii). \\
\midrule
crop scale & $s_c$ & 2.2 & Context admitted around the detection box. \\
working size & --- & $512^2$ & Canonical resolution for all geometry and shading. \\
Lambert wrap & $w$ & 0.28 & Light carried past the terminator (subsurface proxy). \\
Lambert sharpness & $\gamma$ & 1.4 & Falloff crispness. \\
ray-march steps & $T$ & 48 & Screen-space cast-shadow samples. \\
ray length & $t_{\max}$ & 0.22 & Maximum march, in crop widths. \\
depth slope & $c_m$ & 0.55 & Converts $\ell_z$ into depth-buffer units in Eq.~\ref{eq:vis}. \\
occlusion scale & $\varepsilon$ & 0.03 & Soft hit threshold in Eq.~\ref{eq:vis}. \\
ray bias & $b$ & 0.004 & Self-occlusion offset. \\
AO sigmoid & $\lambda$ & 2 & Steepness of Eq.~\ref{eq:ao}. \\
AO blur & $\varsigma$ & 14\,px & Gaussian support of $\bar D_\varsigma$. \\
cavity boost & $a_f$ & 0.18 & Semantic crease darkening. \\
analytic $\rho$ & $\rho$ & 0.58 & Ambient ratio used in Sec.~\ref{sec:exp} (not estimated). \\
\bottomrule
\end{tabular}
\end{center}

\section*{Appendix B: Debug panel semantics}
\begin{center}\small
\begin{tabular}{@{}ll@{}}
\toprule
Tile & Interpretation \\
\midrule
input face & the original crop, before any operation \\
face depth map & rasterised $D$: nose forward, orbital sockets recessed \\
surface normals & $\nn$ encoded as RGB; verifies mesh orientation \\
ambient occlusion & $A$: cavity term, independent of light direction \\
current shading & $\Sigma$ before normalisation --- diagnostic only, never applied \\
depth shadow map & $S$ after percentile normalisation: where darkening \emph{could} occur \\
shading gain & $g$: the multiplier actually applied ($1$ = no change) \\
face-only mask & $\alpha$: where the effect is permitted at all \\
harmonized & the composited output \\
\bottomrule
\end{tabular}
\end{center}

\end{document}